\newcolumntype{Y}{>{\centering\arraybackslash}X}
\providecommand{\U}[1]{\protect\rule{.1in}{.1in}}
\newtheorem{lemma}{Lemma} 
\newtheorem{prop}{Proposition}
\author{Xiaotian Xu\thanks{Authors are with the Department of Mechanical Engineering,
		2181 Glenn L. Martin Hall, Building 088,
		University of Maryland,
		College Park, MD 20742, USA.
		Email: xxu0116@umd.edu, yancy@umd.edu.} 
	~and~Yancy Diaz-Mercado
}%
\title{\LARGE \bf
Swarm Herding: A Leader-Follower Framework For Multi-Robot Navigation}
\date{\vspace{-5ex}}
\begin{document}
\maketitle

	\begin{abstract} 
	
	A leader-follower framework is proposed for multi-robot navigation of large scale teams where the leader agents corral the follower agents. A group of leaders is modeled as a 2D deformable object where discrete masses (i.e., leader robots) are interconnected by springs and dampers. A time-varying domain is defined by the positions of leaders while the external forces induce deformations of the domain from its nominal configuration. The team of followers is performing coverage over the time-varying domain by employing a perspective transformation that maps between the nominal and deformed configurations. A decentralized control strategy is proposed where a leader only takes local sensing information and information about its neighbors (connected by virtual springs and dampers), and a follower only needs partial information about leaders and information about its Delaunay neighbors.
	
\end{abstract}%


\section{Introduction}

Multi-robot systems (MRS) have received a lot of attention for decades for their advantage of using coordinating efforts to achieve complex tasks and applications \cite{nagatani2011multirobot, huntsberger2003campout}. The navigation of a multi-robot system is one of the topics in the field which has excited researchers' interest in recent years. The behavior of a multi-robot system in navigation has been significantly inspired by some biological systems,  e.g., flocking of birds \cite{zou2018conservation}, swarming of bees \cite{funfhaus2018swarming}. It can be noticed that these biological systems possess common features to MRS such as consisting of a large number of individuals, each individual determining its motion based on local sensing information and communication only. 

The control of navigation of a multi-robot system contains multiple objectives: first, avoiding collision among robots and between a robot and evironmental obstacles using local information; second, maintaining certain level of cohesion between robots in the group; third, tracking the optimal path under the appropriate metric. Many algorithms for cooperative control of a group of robots have been developed, e.g., behavior-based controller \cite{1245651}, virtual structural approach \cite{du2017distributed}, leader-follower network \cite{hsu2004multiple}, potential field approach \cite{chaimowicz2005controlling}. However, most of these algorithms can only partially achieve the objectives above, thus these methods are usually modified and coupled together. In \cite{pimenta2013swarm}, the swarm is modeled as an incompressible fluid subject to external force generated by potential field, and the smoothed particle hydrodynamics method is used to deal with the inter-robot coordination. In \cite{roy2020geometric}, a region-based shape control strategy is proposed by combining the controller of updating a virtual-circle (ellipse) based on the sensing information of the environment and the controller of maintaining robots to stay inside the virtual-region and keep certain formation.

In this paper, to achieve the objectives mentioned above, a three-layer control strategy is proposed. Specifically, the first layer of the controller provides a global optimal reference path from a start position to a goal position in the environment. The reference path can either be obtained beforehand for the known environment, or be computed online with available sensing information in an unknown environment by using different existing off-the-shore path-planning algorithms (e.g., Dijkstra \cite{johnson1973note}, RRT \cite{lavalle1998rapidly}, road map method \cite{kavraki1998analysis}). One can employ any reliable path-planning algorithm to find the global optimal path under the desired metric and pass it as an input to the second layer of the controller.

The second layer of the control strategy coordinates the motion planning for a team of leader robots. A time-varying domain is defined by the leader robots. The leader robots will track the reference path provided by the first layer of the control strategy and avoid the obstacles based on the sensing information about the environment. The team of leader robots will be modeled as a deformable object, i.e., a mass-spring-damper (MSD) network. The path planning for 3D deformable objects are investigated in reduced configuration space \cite{mahoney2010deformable}. In \cite{anshelevich2000deformable}, 3D objects deform under manipulation constrains, and the path planner searches the roadmap given the initial and final configuraitons. In this paper, we consider a 2D deformable domain formed by leader robots and for which the control is decentralized since each leader robots only interact with its neighbors in the MSD network.

The third layer provides control for a team of follower robots which stay in the time-varing domain defined by leader robots and perform coverage over it \cite{xu2020multi}. This layer of controller allows the followers to distributed in the domain defined by the leaders, to coordinates with each other to avoid collision (in point particle case), and to efficiently capture the motion of leaders. As the leaders are avoiding the obstacles, the induced time-varying domain becomes non-convex which limits the applicability of standard coverage strategies. To address this problem, a perspective transformation is used to transform quadrilaterals defined by groups of four leaders into static rectangular ones. A follower only needs local information, i.e., partial information about the leaders and information about the followers which are in its Delaunay neighborhood, to figure out its motion, thus the third layer of the control is decentralized.

%

The organization of the paper is as follows. In Section \ref{Sec:MotionPlanOfLeaders}, we begin with modeling a group of leader robots as a mass-spring-damper system and design the external forces such that the motion of the leaders can be controlled. The control strategy of a team of follower robots is presented in Section \ref{Sec:Coverage}. In Section \ref{Simulation}, a simulation using the proposed control scheme is conducted. Conclusions are drawn in Section \ref{Conclusion}.

\section{Motion Planning of A team of Leaders}\label{Sec:MotionPlanOfLeaders} 
In this section, we will discuss the second layer of the proposed control strategy. The leader robots will coordinate with each other and interact with the environment to define a time-varying domain. This domain will be used to perform coverage control of follower robots in the third layer of the controller. The induced time-varying domain should preserve a minimum area for the followers to occupy, and we assume that there exists a reference path that allows for passage through the environment.

\subsection{Model of a team of leader robots}

In this paper, a team of leader robots is modeled as a mass-spring-damper network subject to external forces which are generated by multiple constraints. The leaders are considered as masses that are interconnected by virtual springs and dampers as shown in the Fig. \ref{Pic_MassSpringDamper}.

\begin{figure}[t!]\centering
	\includegraphics[width=0.6\linewidth,clip=true,trim=0 0 0 -10]{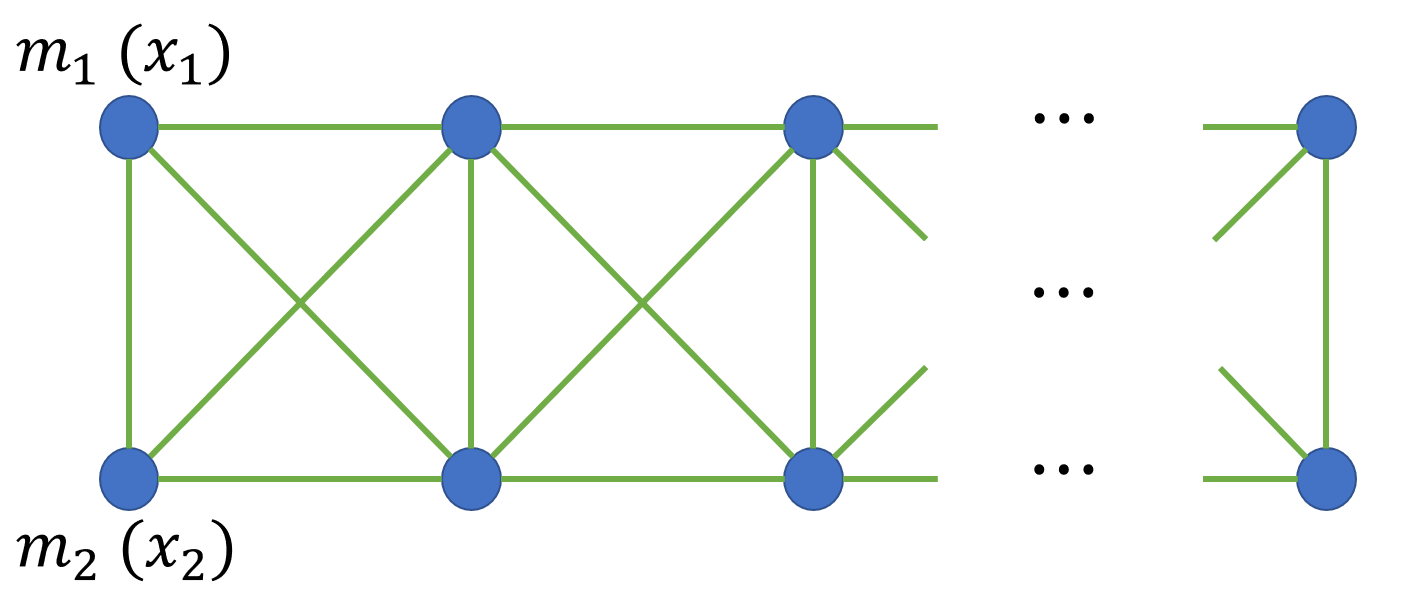}
	\caption{Mass-Spring-Damper Network}\label{Pic_MassSpringDamper}
\end{figure}

Let $x_k\in\mathbb{R},\,k = 1,\cdots,M$ ($M$ is an even number) denote the position of the leaders. Based on Newton's second law, the equation of motion is given by
\begin{align}\label{eq:MSD}
	m\ddot{x}_{k} = \sum_{l\in\mathcal{N}_{k}}g_{kl} + \sum_{l\in\mathcal{N}_{k}}f_{kl} + F_{k}
\end{align}
where $\mathcal{N}_{k}$ denotes a set which contains masses connected to mass $k$ (i.e., neighbor set of mass $k$), the spring force and the damping force on mass $i$ are $g_{kl} = - c(\dot{x}_k - \dot{x}_l)$ and $f_{kl} = -\kappa(\|x_k-x_l\|-l_{kl}^{o})\frac{x_k - x_l}{\|x_k-x_l\|}$ respectively, and the $F_k$ is the external forces applied on mass $k$. Moreover, $c$ denotes the damping constant, $\kappa$ denotes the stiffness, $l_{kl}^{o}$ represents the rest length of the spring between mass $k$ and $l$.

For such a system, the stability property can be described as the following proposition.
\begin{prop}[Stability of the MSD Network]\label{prop:1}
	The homogeneous version of the mass-spring-damper (MSD) system defined in eq. \eqref{eq:MSD} is asymptotically stable.
\end{prop}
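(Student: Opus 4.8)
The plan is to read ``homogeneous version'' as the unforced network ($F_k \equiv 0$), treat it as a conservative mechanical system with dissipation, and certify convergence with its total mechanical energy together with LaSalle's invariance principle. First I would stack the leader states into $z = (x^\top, \dot{x}^\top)^\top$ and rewrite \eqref{eq:MSD} in matrix form as $m\ddot{x} = -C\dot{x} - \nabla_x U(x)$, where $C$ is the weighted (relative) damping Laplacian of the interaction graph with uniform weight $c$, and $U(x) = \frac{\kappa}{4}\sum_{k}\sum_{l\in\mathcal{N}_k}(\|x_k-x_l\|-l_{kl}^{o})^2$ is the elastic potential whose gradient reproduces the aggregate spring force, i.e. $\sum_{l\in\mathcal{N}_k} f_{kl} = -\nabla_{x_k} U$.

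Second, I would take the candidate Lyapunov function $V = \tfrac12 m\|\dot{x}\|^2 + U(x)$, the total kinetic plus elastic energy, which is positive definite in the relative configuration and velocity variables. Differentiating along trajectories, the conservative spring terms cancel against $\dot{U}$ and only dissipation survives, giving $\dot{V} = -\dot{x}^\top C\dot{x} = -\tfrac{c}{2}\sum_{k}\sum_{l\in\mathcal{N}_k}(\dot{x}_k-\dot{x}_l)^2 \le 0$. This already yields Lyapunov stability of the equilibrium set.

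Third, to upgrade to asymptotic stability I would invoke LaSalle: bounded trajectories converge to the largest invariant set contained in $\{\dot{V}=0\}$. The condition $\dot{V}=0$ forces $\dot{x}_k=\dot{x}_l$ across every edge, so on a connected graph all velocities coincide; invariance then requires the common acceleration to vanish, which pins the configuration to the manifold of force-balanced (rest-length) equilibria. Relative positions and relative velocities therefore decay to zero.

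The main obstacle, and the point I would handle most carefully, is the rigid-body translation mode. Because the damping is relative, $C$ and the (linearized) stiffness matrix $K$ share the null space spanned by $\mathbf{1}$, so the center of mass is undamped and drifts at constant velocity; the raw state is thus not asymptotically stable in the strict sense. I would resolve this by passing to relative (shape) coordinates, equivalently quotienting out the $\mathbf{1}$ direction, and establishing asymptotic stability there. As a cross-check, since $C=cL$ and $K=\kappa L$ are proportional to the same Laplacian $L$ they are simultaneously diagonalizable, and each nonzero mode $\lambda_i$ obeys the scalar oscillator $m\ddot{q}_i + c\lambda_i\dot{q}_i + \kappa\lambda_i q_i = 0$ with strictly positive coefficients, hence Hurwitz by Routh--Hurwitz, while the $\lambda_i=0$ mode is exactly the translational drift I have factored out. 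This confirms the Lyapunov/LaSalle conclusion and isolates the only non-decaying direction.
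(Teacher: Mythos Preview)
Your proposal is correct and follows essentially the same route as the paper: the paper also takes the total mechanical energy $V=\tfrac12\|\dot x\|^2+\tfrac{\kappa}{4m}\sum_k\sum_{l\in\mathcal N_k}(\|x_k-x_l\|-l_{kl}^o)^2$ as Lyapunov function, obtains $\dot V=-\tfrac{c}{m}\dot x^\top L\dot x\le 0$, and then invokes LaSalle to conclude convergence to the set $\{\|x_k-x_l\|=l_{kl}^o,\ \dot x\in\mathrm{span}\{\mathds 1\}\}$. Your explicit handling of the rigid-translation mode (quotienting out $\mathds 1$) and the modal cross-check are additions the paper omits---the paper simply states set stability of $\Omega_I$ rather than reducing to shape coordinates---but the core argument is identical.
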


\begin{proof}
	Without loss of generality, we consider a general mass-spring-damper network with $M$ masses. The homogeneous version of the motion equation can be rearranged,
	\begin{align*}
		\ddot{x}_{k} = \frac{1}{m}\sum_{l\in\mathcal{N}_{k}}c(\dot{x}_l - \dot{x}_k)
		+ \frac{1}{m}\sum_{l\in\mathcal{N}_{k}}\kappa(\|x_k-x_l\|-l_{kl}^{o})\frac{x_l - x_k}{\|x_k-x_l\|}
	\end{align*}
	Let us consider the topology of interaction for the MSD network to be given by an undirected and connected graph \cite{mesbahi2010graph}. Define the Laplacian matrix $L = DD^T$ and the weighted Laplacian matrix  \(L_{w} = DWD^{T}\), where $D$ denotes the incidence matrix which encodes the edge information, \(w_{kl} =\frac{\kappa(\|x_k-x_l\|-l_{kl}^o)}{m\|x_k-x_l\|} \) represents the weight on the edge between mass $k$ and $l$, and \(W = \text{diag}(w_{kl})\) is a diagonal matrix with $w_{kl}$ on its diagonal. The Laplacian matrix is positive semi-definite ($L\succeq 0$) and the weighted Laplacian matrix is symmetric ($L_{w} = L_{w}^T$). Then, by letting $x = [x_1,\cdots,x_M]^T$, the ensemble-level dynamics can be written as
	\begin{align*}
		\frac{d}{dt}\begin{bmatrix}
			x\\\dot{x}
		\end{bmatrix} =  \begin{bmatrix}
			0&I\\-L_{w}&-\frac{c}{m}L
		\end{bmatrix}\begin{bmatrix}
			x\\\dot{x}
		\end{bmatrix}
	\end{align*}
	Further define $\mathcal{E}_k =
	\sum_{l\in\mathcal{N}_{k}}\frac{\kappa}{2m}(\|x_k-x_l\|-l_{kl}^{o})^2$, and $\mathcal{E} =
	[\mathcal{E}_1, \mathcal{E}_2, \cdots,\mathcal{E}_M]^T$, and choose a candidate Lyapunov function to be 
	\begin{align*}
		V = \frac{1}{2}\mathds{1}^T \mathcal{E} + \frac{1}{2}\|\dot{x}\|^2,
	\end{align*}
	where $\mathds{1} = [1,\,1,\cdots,1]^T$, then
	\begin{align*}
		\dot{V} = \frac{1}{2}\mathds{1}^T \frac{\partial \mathcal{E}}{\partial x}\dot{x} + \dot{x}^T\ddot{x}
	\end{align*}
	where
	\begin{align*}
		\frac{\partial \mathcal{E}}{\partial x} = \begin{bmatrix}
			\frac{\partial \mathcal{E}_1}{\partial x_{1}}&\frac{\partial \mathcal{E}_1}{\partial x_{2}}&\cdots&\frac{\partial \mathcal{E}_1}{\partial x_{M}}\\
			\frac{\partial \mathcal{E}_2}{\partial x_{1}}&\ddots&&\vdots\\
			\vdots&&\ddots&\vdots\\
			\frac{\partial \mathcal{E}_M}{\partial x_{1}}&\cdots&\cdots&\frac{\partial \mathcal{E}_M}{\partial x_{M}}
		\end{bmatrix}.
	\end{align*}
	It can be found that 
	\begin{align*}
		\frac{\partial \mathcal{E}_k}{\partial x_{k}} = \sum_{l\in\mathcal{N}_{k}}\frac{\kappa(\|x_k-x_l\|-l_{kl}^o)}{m\|x_k-x_l\|}(x_k-x_l)^T
	\end{align*}
	\begin{align*}
		\frac{\partial \mathcal{E}_k}{\partial x_{l}} =\begin{cases}
			-\frac{\kappa(\|x_k-x_l\|-l_{kl}^o)}{m\|x_k-x_l\|}(x_k-x_l)^T,~~~&l\in\mathcal{N}_{k}\\
			0, &\text{otherwise}
		\end{cases}
	\end{align*}
	\begin{align*}
		\frac{\partial \mathcal{E}_l}{\partial x_{k}} =\begin{cases}
			-\frac{\kappa(\|x_l-x_k\|-l_{kl}^o)}{m\|x_l-x_k\|}(x_l-x_k)^T,~~~&k\in\mathcal{N}_{l}\\
			0, &\text{otherwise}
		\end{cases}
	\end{align*}
	and the $k$th column of $\mathds{1}^T \frac{\partial \mathcal{E}}{\partial x}$ yields
	\begin{align*}
		\mathds{1}^T\begin{bmatrix}
			\frac{\partial \mathcal{E}_1}{\partial x_{k}}\\	\frac{\partial \mathcal{E}_2}{\partial x_{k}}\\\vdots\\	\frac{\partial \mathcal{E}_M}{\partial x_{k}}
		\end{bmatrix} = 2\sum_{l\in\mathcal{N}_{k}}\frac{\kappa(\|x_k-x_l\|-l_{kl}^o)}{m\|x_k-x_l\|}(x_k-x_l)^T
	\end{align*}
	Then 
	\begin{align*}
		\mathds{1}^T \frac{\partial \mathcal{E}}{\partial x} &= 2\begin{bmatrix}
			\sum_{l\in\mathcal{N}_{1}}\frac{\kappa(\|x_1-x_l\|-l_{1l}^o)}{m\|x_1-x_l\|}(x_1-x_l)^T\\
			\sum_{l\in\mathcal{N}_{2}}\frac{\kappa(\|x_2-x_l\|-l_{2l}^o)}{m\|x_2-x_l\|}(x_2-x_l)^T\\
			\vdots\\
			\sum_{l\in\mathcal{N}_{M}}\frac{\kappa(\|x_M-x_l\|-l_{Nl}^o)}{m\|x_M-x_l\|}(x_M-x_l)^T
		\end{bmatrix}^T	\\
		& = 2(L_{w}x)^T.
	\end{align*}
	Hence,
	\begin{align*}
		\dot{V} &= \frac{1}{2}\mathds{1}^T \frac{\partial \mathcal{E}}{\partial x}\dot{x} + \dot{x}^T\ddot{x}\\
		& = x^TL_w^T\dot{x} + \dot{x}^T(-L_{w}x -\frac{c}{m}L\dot{x})\\
		& = -\frac{c}{m}\dot{x}^TL\dot{x} \leq 0
	\end{align*}
	Now define $\Omega_{\alpha} = \big\{x, \dot{x}\in\mathbb{R}^{N}\, \big|\,V(x,\dot{x})\leq \alpha ,\, \alpha >0\big\}$, and let $E = \big\{x, \dot{x}\in\Omega_{\alpha}\, \big|\,\dot{V}(x,\dot{x}) = 0\big\}= \big\{x, \dot{x}\,\big|\,x \in\mathbb{R}^{N}, \dot{x}\in \text{null}(L)=\text{span}\{\mathds{1}\}\big\}$, applying the invariance condition $\ddot{x} = 0$, we find that the largest invariant set $\Omega_{I} = \big\{x, \dot{x}\in\mathbb{R}^{N}\, \big|\,\|x_k-x_l\| = l_{kl}^o \,\forall k,l;\, \dot{x}\in \text{span}\{\mathds{1}\}\big\}$. By LaSalle's Invariance Principle \cite{lasalle1960some}, every solution to the system starts in $\Omega_{\alpha }$ will asymptotically approach to $\Omega_{I}$ as $t\to \infty$. Finally, let $\alpha\to \infty$, the set $\Omega_{I}$ is global asymptotically stable.
\end{proof}

From the proof above, without the external input force $F_k$, it can be noticed that given an initial condition the MSD system will asymptotically converge to the configuration with springs at their rest lengths, and all the masses have the same velocity. Now we can design the external forces such that the domain defined by leaders will move along the reference path, and the vertices of the domain (leader robots) will avoid the obstacles.

To satisfy the constraint that the domain defined by the leaders always sustains an area for the follower robots to occupy, we introduce the following lemma. 
\begin{lemma}[Constraint on deformation of MSD system]\label{lemma:DeformationOfMSD}
	Let the rest length of the vertical and horizontal springs be $l^{o}$, and the diagonal springs be $\sqrt{2}l^{o}$, then in the worst case, each triangle formed by three interconnected leaders must satisfies the triangle inequality when deforms subject to external forces. It can be found that if for each spring, the rate of elongation or compression does not exceed $\frac{2-\sqrt{2}}{2+\sqrt{2}}\times100\% \approx 17.2\%$, the induced domain will sustain certain amount of area, and each mesh defined by four interconnected leaders will be convex for all time.
\end{lemma}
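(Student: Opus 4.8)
The plan is to collapse the convexity-and-area claim into a single worst-case triangle inequality and then solve for the admissible deformation rate. First I would fix the nominal geometry: in the undeformed mesh every cell is a square of side $l^{o}$ carrying its diagonals of length $\sqrt{2}l^{o}$, so each cell decomposes into four congruent right isosceles triangles, each having two legs of length $l^{o}$ (horizontal/vertical springs) and a hypotenuse of length $\sqrt{2}l^{o}$ (a diagonal spring). Because a quadrilateral stays convex and non-degenerate exactly when none of the triangles built on its diagonals collapses, it suffices to guarantee that every such triangle retains strictly positive area under deformation.

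Next I would isolate the binding constraint. For a triangle with sides $a,b,c$, non-degeneracy is equivalent to the strict triangle inequality on the longest side; in the right isosceles triangle the only inequality with small slack is legs-versus-hypotenuse, namely $l^{o}+l^{o}>\sqrt{2}l^{o}$ (i.e. $2>\sqrt{2}$), the remaining two having large margins. I would then argue that, over the admissible length box in which each side lies within a factor $(1\pm\epsilon)$ of its rest length, the quantity $a+b-c$ is linear in each side and hence minimized at the corner that shrinks both legs to $l^{o}(1-\epsilon)$ and stretches the hypotenuse to $\sqrt{2}l^{o}(1+\epsilon)$; this corner keeps the hypotenuse longest and is precisely the configuration closest to collapse. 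If this worst corner satisfies the strict inequality, every admissible configuration does.

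Setting the degenerate (collinear) boundary $2l^{o}(1-\epsilon)=\sqrt{2}l^{o}(1+\epsilon)$ and solving yields the critical rate
\begin{align*}
	\epsilon^{\star} = \frac{2-\sqrt{2}}{2+\sqrt{2}} = 3-2\sqrt{2}\approx 0.172,
\end{align*}
so for every $\epsilon<\epsilon^{\star}$ the strict triangle inequality holds for all four triangles of every cell, each of which therefore retains positive area. To finish, I would convert triangle non-degeneracy into cell convexity: each interior angle of a cell $ABCD$ coincides with an angle of one of the four diagonal-triangles (for instance $\angle ABC$ is the angle at $B$ in triangle $ABC$, and $\angle DAB$ the angle at $A$ in triangle $ABD$), so non-degeneracy forces every interior angle to be strictly less than $\pi$. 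Since the interior angles of a simple quadrilateral sum to $2\pi$, the cell is convex, and by continuity from the convex nominal square it cannot self-intersect while no triangle degenerates.

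The algebra producing $\epsilon^{\star}$ is routine, as is the corner optimization once one notes that $a+b-c$ is linear in each side. The step needing the most care, and the one I expect to be the main obstacle, is the geometric equivalence between triangle non-degeneracy and cell convexity: I must verify that every interior angle of the cell is genuinely realized as an angle of one of the four diagonal-triangles, and then combine the angle-sum identity for simple quadrilaterals with the continuity argument from the nominal configuration to rule out self-intersection. This is where a careless proof would merely assert convexity; making that implication airtight is the crux.
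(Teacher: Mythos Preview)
Your proposal is correct and follows exactly the reasoning the paper itself uses: the paper provides no separate proof of this lemma, the justification being folded into the statement (worst-case triangle inequality on the leg--leg--hypotenuse triple gives the critical rate $\frac{2-\sqrt{2}}{2+\sqrt{2}}$). You have simply made explicit the algebra and, with the continuity-from-the-nominal-square argument, supplied the convexity step that the paper asserts without justification.
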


Lemma \ref{lemma:DeformationOfMSD} imposes a restriction on the deformation of the mass-spring-damper network such that each mesh defined by 4 masses will always be a convex quadrilateral, and this property will be used in next section.

\subsection{Motion planning of the mass-spring-damper system}
Several constraints are imposed on the masses of the MSD network, the ``head" of the MSD network (i.e., the $m_1$ and $m_2$ shown in Fig. \ref{Pic_MassSpringDamper}) will take the responsibility of tracking the reference path and determining the orientation of the system, and all the masses (leader robots) will avoid the obstacles based on the sensing information.  

The external force applied to a mass is defined as
\begin{align}\label{eq:externalF}
	F_k = f_{\text{friction}}^{k} + f_{\text{tracking}}^{k} + f_{\text{orienting}}^{k}  + f_{\text{sensing}}^{k}
\end{align}
where $f_{\text{friction}}^{K}$ is a virtual friction force imposed on masses to dissipate kinetic energy stored in the system. $f_{\text{tracking}}^{k}$ is the force which drives the team of leaders to move along the reference path, the force $f_{\text{orienting}}^{k}$ adjusts the orientation of the system, and $f_{\text{sensing}}^{k}$ is the force generated based on sensing information that drives the $k$th leader robot to move away from the obstacles.

The sensing information about the environment obtained by the sensors installed on the leaders can be presented as $[	o_{k},\,d_{k} ]$, where $	o_{k}$ denotes the position of the nearest obstacle, and $d_{k}$ denotes the distance from the nearest obstacle detected in the $k$th leader's reference frame, then the external force which drives the leader robot away from the obstacle is defined as
\begin{align}\label{eq:sensingforce}
	f_{\text{sensing}}^{k} = -\frac{\kappa_{1}(x_{k}-o_{k} )}{\|x_{k}-o_{k}\|(d_{k}-\delta_{\text{sensing}})^2}
\end{align}
where $\kappa_{1}>0$ is a tuning parameter, and $\delta_{\text{sensing}}$ is the prescribed safe distance between a leader robot and the obstacles. Perceivably, $\|f_{\text{sensing}}^{k}\|\to \infty$ as $d_{k}\to\delta_{\text{sensing}}$.

Moreover, we assume that the first layer of our control scheme returns the information of global reference path in the form of $[	\Gamma,\,\dot{\Gamma} ]$, then $f_{\text{tracking}}^{k}$ is defined as
\begin{align}\label{eq:trackingforce}
	f_{\text{tracking}}^{k} = \begin{cases}
		\dot{c}_{x} = \dot{\Gamma} + \kappa_2(\Gamma-c_{x}),\,&k =1,\,2\\
		0 \,&\text{otherwise}
	\end{cases}
\end{align}
which has a feedforward term $\dot{\Gamma}$ and a feedback term $\kappa_2(\Gamma-c_{x})$ to drive the bisecting point of the positions of ``head" of the system (i.e., $c_{x} = \frac{1}{2}(x_1+x_2)$) to track the reference path, where $\kappa_2>0$ is a tuning parameter.

As we shown in Proposition \ref{prop:1}, even if the external force becomes zero, the system will keep moving at certain speed, thus friction is necessary to exist for dissipation of the kinetic energy stored in the system and can be designed as
\begin{align*}
	f_{\text{friction}}^{k}  = -\kappa_{3}\dot{x}_k
\end{align*} 
where $\kappa_{3}>0$ is a control parameter.

Besides, the ``head" of the MSD network (i.e., $m_1$ and $m_2$) also takes the responsibility to determine the orientation of the system, i.e., the line $\overline{x_1x_2}$ maintains orthogonal to the path $\Gamma$. Thus the force $f_{\text{orienting}}^{k}$ can be designed as
\begin{align}\label{eq:Orientingforce}
	f_{\text{orienting}}^{k} = \begin{cases}
		-\kappa_{4}\left((x_1-x_2)^T\dot{\Gamma}\right)\dot{\Gamma},\,&k = 1\\
		\kappa_{4}\left((x_1-x_2)^T\dot{\Gamma}\right)\dot{\Gamma},\,&k = 2\\
		0,\,&\text{otherwise}
	\end{cases}
\end{align}
where $\kappa_{4}>0$. The inner product of vector $x_1-x_2$ and the tangent of path $\dot{\Gamma}$ is greater than zero indicates the angle between the two vectors is less than $90^{\circ}$, then the the force $f_{\text{orienting}}^{1}$ will drive $m_1$ opposite to the direction of $\dot{\Gamma}$ until the inner product becomes zero.

It can be seen that by applying the designed external forces to the MSD network, the ``head" of network will steer and lead entire system, and the orientation information will be propagated through the network until it finally reaches the ``tail" of the network; the robots will also avoid the obstacles based on the sensing information. 

Now we can define a time-varying domain $\mathcal{S}(t)$ with vertices $\{x_{k},\,k = 1,\cdots,M\}$. The time-varying domain will evolve subject to the external force defined by \eqref{eq:externalF}. In the next section, we can design the control scheme for a team of followers.

\section{Coverage Control of A Team of Followers} \label{Sec:Coverage}

A coverage control algorithm naturally offers the ability to coordinate the motion of robots in a team. In this section, we will employ a coverage control law which is decentralized and can efficiently capture the evolution of the time-varying domain by extending ideas from our previous work \cite{xu2020multi}. The advantage of coverage control is that the individual controller for each robot in a team is synthesized such that the team can be controlled as a whole with time-varying densities\cite{SwarmBook} and time-varying domains\cite{xu2020multi}. For completeness, we begin this section with briefly introducing the coverage algorithm over convex time-varying domains.

\subsection{Coverage over time-varying convex domains}
To find the best configuration formed by the robotic team over a domain, we partition the domain into non-overlapping regions and let each region be taken care of by a robot. Formally, let
$p_{i}\in\mathcal{S}(t),i\in\{1,\ldots,N\}$ be the positions
of a group of follower robots in a convex domain $\mathcal{S}(t)$ defined by leaders, the coverage problem is dictated as finding the best configuration of robots $p(t)=\left[  p_{1}^{T}(t),\ldots,p_{N}^{T}(t)\right]^{T}$ to minimize the locational cost
\cite{locationalCost} which evaluates the coverage performance of $p$ over the domain
$\mathcal{S}$: 
\begin{equation} \label{eq:locationalCost}
	\mathcal{H}(p(t),t) = \sum_{i=1}^{n}
	\int_{V_{i}(p(t),t)} \|p_{i}(t)-q\|  ^{2} \phi(q,t) \,dq
\end{equation} 
where $\phi(q,t):\mathcal{S}(t)\times [0,\infty) \to\mathbb{R}_{>0}$ is a density
function that represents the relative importance of each point in the domain at time $t$, and $V_{i}(p,t) =
\left\lbrace q\in\mathcal{S}(t)~\middle\vert~\|p_{i}-q\|
\leq \|p_{j}-q\| ~\forall j\right\rbrace $ is a Voronoi tessellation.


It is known that a necessary condition to minimize the locational cost defined in \eqref{eq:locationalCost} is the centroidal Voronoi tessellation (CVT) configuration, i.e., $p_{i}(t) = C_{i}(p,t)\,\,\,\forall i$, where $C_{i}(p,t)$ is the cener of mass (centroid) of $i$th Voronoi cell. A control law is proposed for time-varying densities and time-varying domains to exponentially converge to a centroidal Voronoi configuration in \cite{SwarmBook, xu2020multi},
\begin{equation}
	\label{eq:TVD-C}\dot{p} = \left(  I - \frac{\partial C}{\partial p}\right)
	^{-1}\left(  K(C(p,t)-p) + \frac{\partial C}{\partial t}\right)
\end{equation} 
where the exponential convergence rate is tuned by the parameter $K>0$.

The control law given in \eqref{eq:TVD-C} can be approximated as
\begin{equation}
	\label{eq:TVD-D1}\dot{p} = \left(  I + \frac{\partial C}{\partial p}\right)
	\left(  K(C(p,t)-p) + \frac{\partial C}{\partial t}\right)
\end{equation} 
by truncating the Neumann series for the matrix inverse to make the control law decentralized.

The expressions of the centroids $C_{i}(p,t)$, and matrices  $\frac{\partial C}{\partial p}$ and $\frac{\partial C}{\partial t}$ in \eqref{eq:TVD-C} and
\eqref{eq:TVD-D1} can be found in \cite{xu2020multi}. The matrix  $\frac{\partial C}{\partial p}$ is a block matrix with sparsity structure encoding the adjacency 
information in the Voronoi tessellation \cite{SwarmBook}, and the matrix $\frac{\partial C}{\partial t}$ captures the evolution of time-varying densities and time-varying domains.

\begin{figure}[h]\centering
	\includegraphics[width=0.5\linewidth,clip=true,trim= 20 0 0 -10]{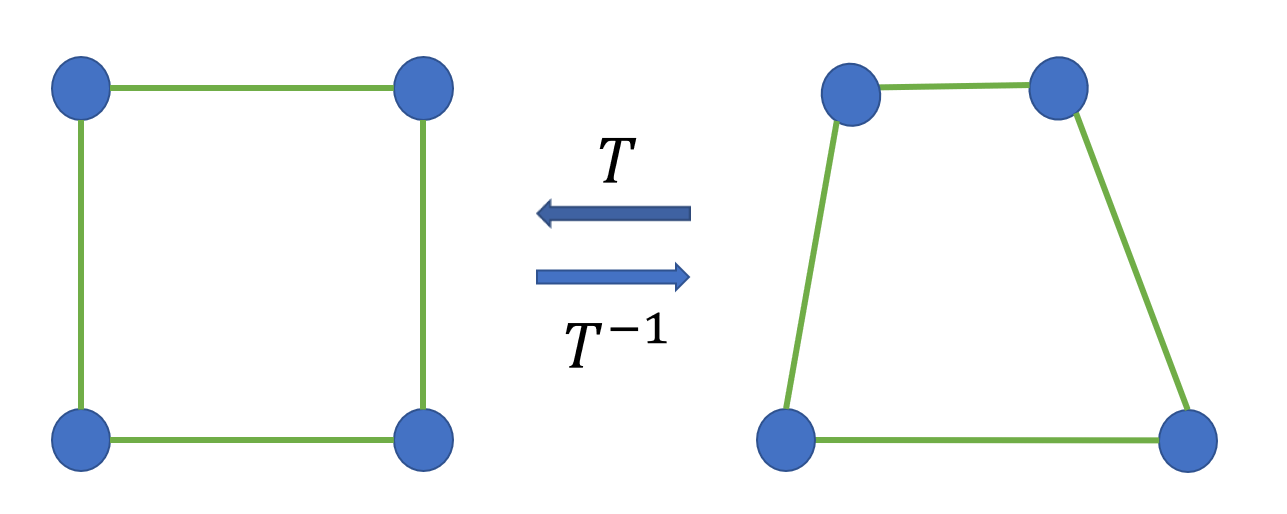}\\
	(i) Quadrilateral-to-quadrilateral projection
	\includegraphics[width=0.6\linewidth,clip=true,trim= 50 0 0 0]{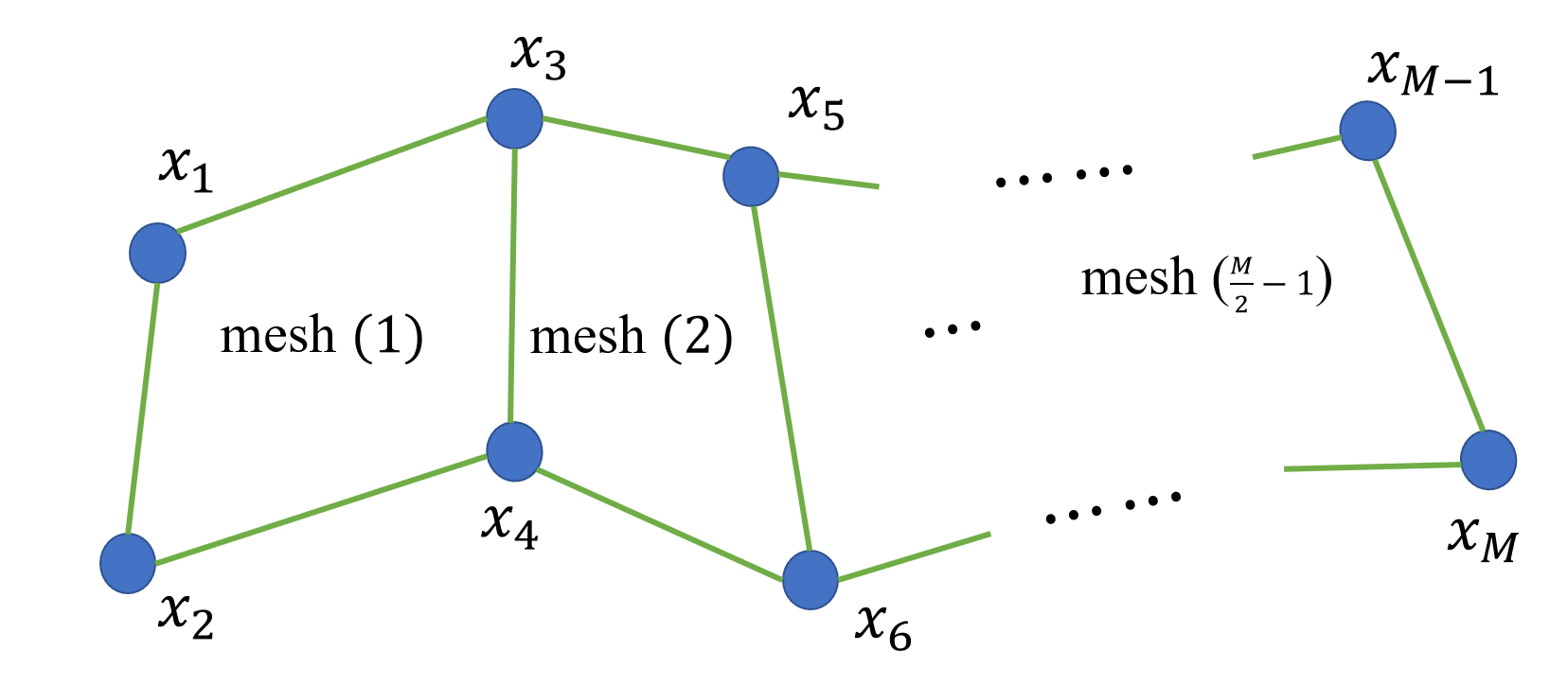}\\
	(ii) Time-varying domain defined by leaders
	\includegraphics[width=0.6\linewidth,clip=true,trim= 5 0 0 0]{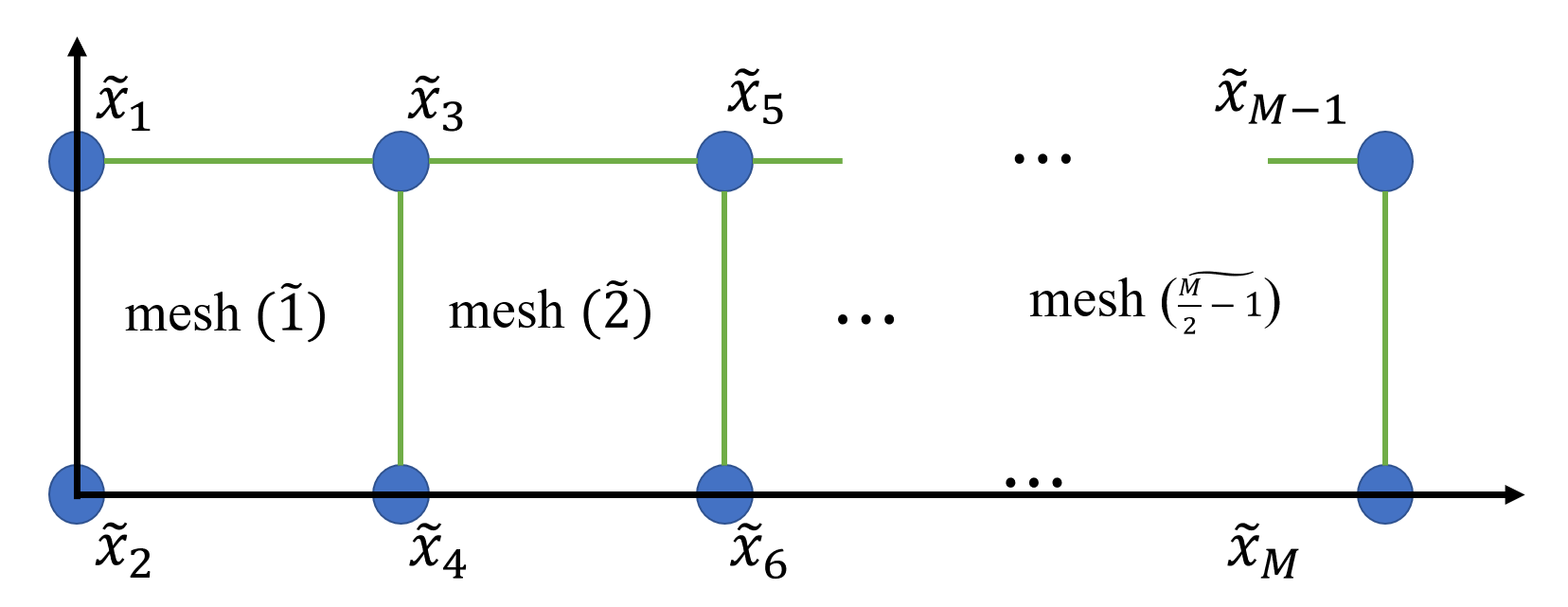}\\
	(iii) Resulting rectangular domain by projection
	\caption{Perspective Projection \label{Pic_Transformation}}
\end{figure}

The coverage control law in \eqref{eq:TVD-C} and \eqref{eq:TVD-D1} can only treat a time-varying convex domain. However, for a domain induced by a group of leader robots moving in a cluttered environment, it is likely that the domain will become non-convex. In previous work \cite{xu2020multiNoncnvx}, we dealt with a non-convex domain by using a diffeomorphism \cite{caicedo2008performing} to map points between the non-convex domain and its convex hull. In this paper, we will treat the problem in a different way that leverages the quadrilateral structure of the mesh formed by the leader agents.

\begin{figure*}[t] \centering
	\begin{tabular}{@{}>{\centering} m{0.245\linewidth} @{}>{\centering\arraybackslash }m{0.245\linewidth} @{}>{\centering\arraybackslash }m{0.245\linewidth}@{}>{\centering\arraybackslash }m{0.245\linewidth}@{}}
		\includegraphics[width=\linewidth,clip=true,trim= 0 0 0 0]{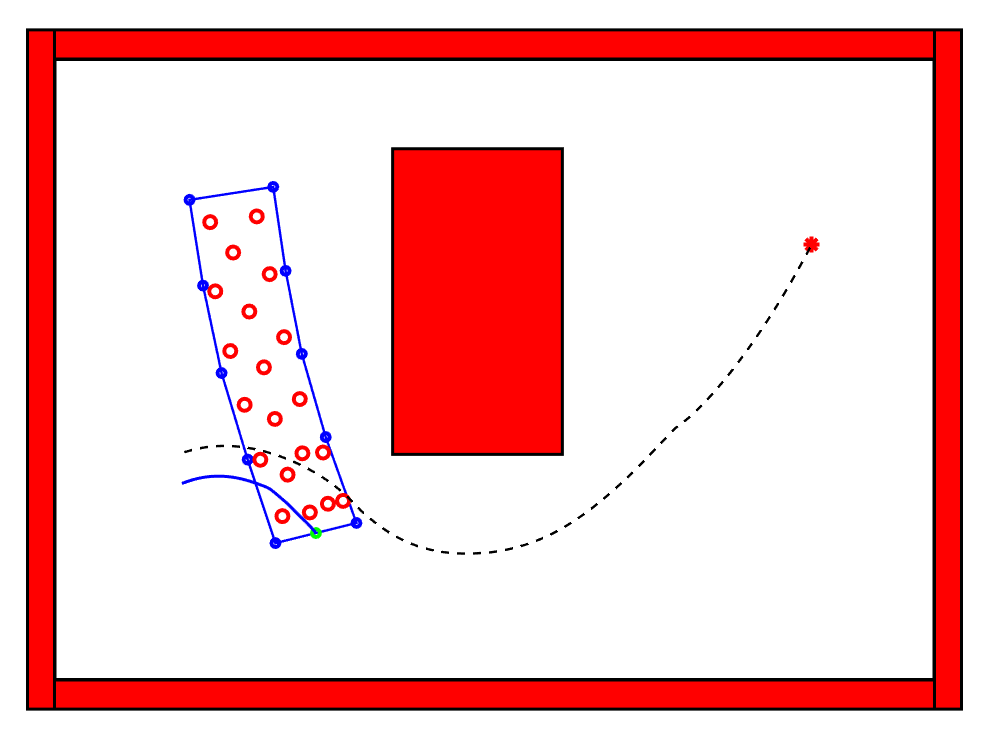} & 
		\includegraphics[width=\linewidth,clip=true,trim= 0 0 0 0]{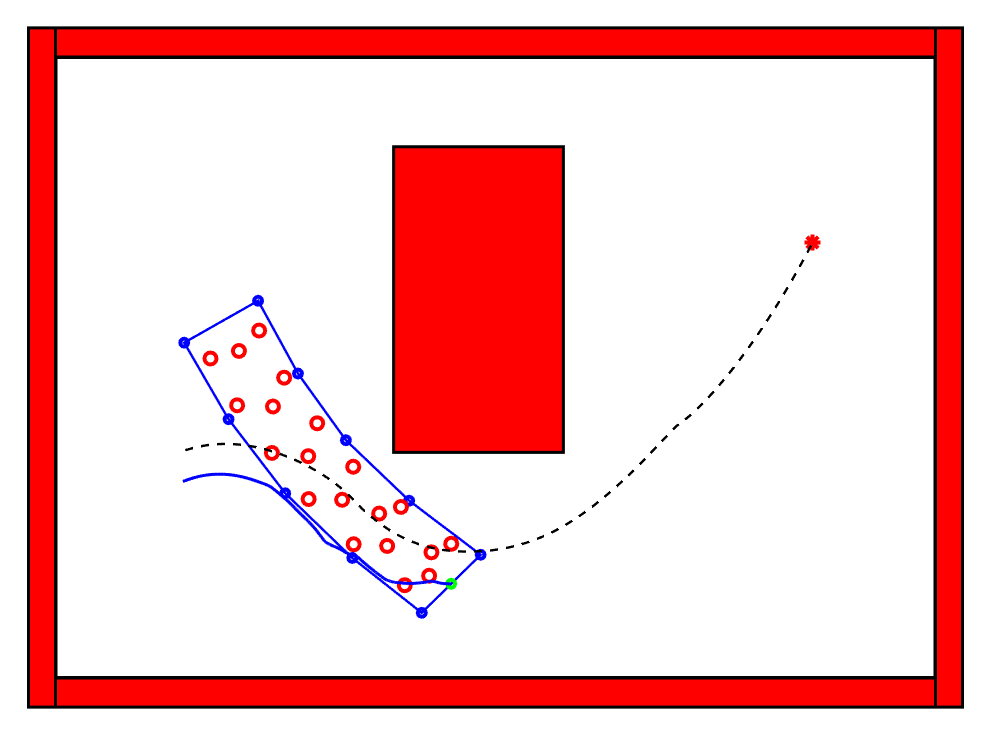} & 
		\includegraphics[width=\linewidth,clip=true,trim= 0 0 0 0]{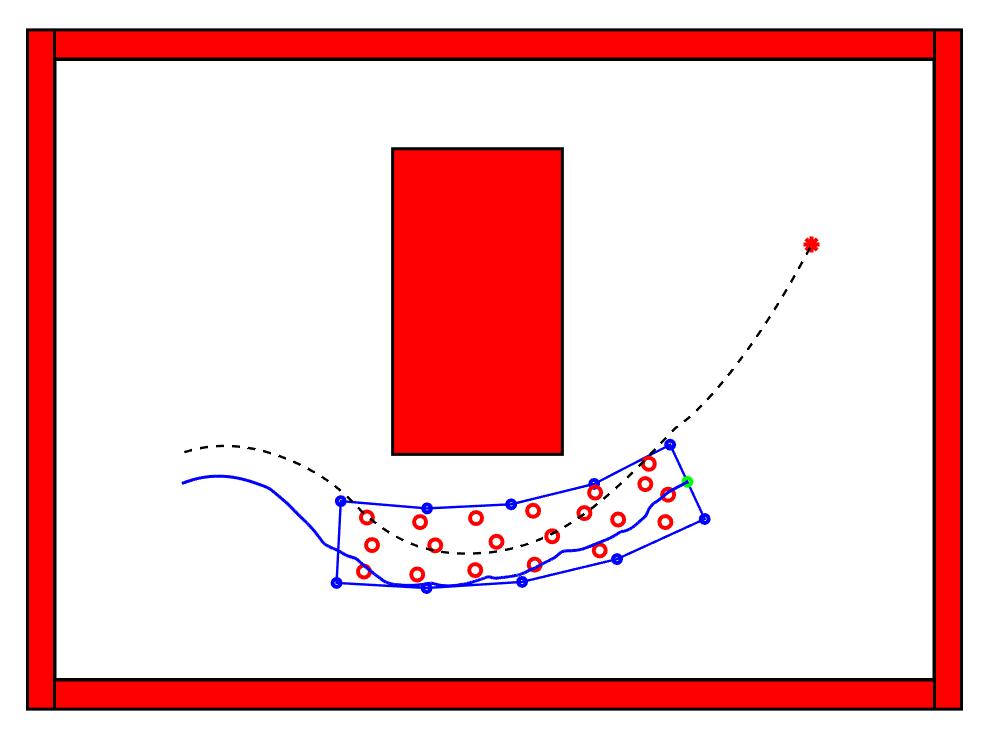}& 
		\includegraphics[width=\linewidth,clip=true,trim= 0 0 0 0]{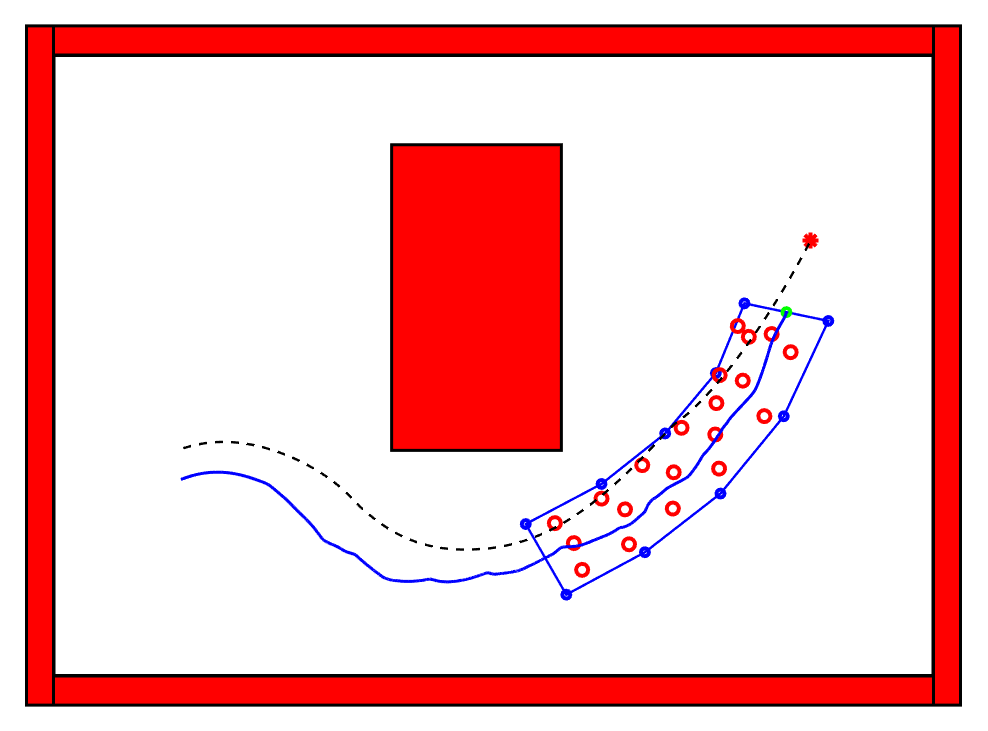}\\
		\includegraphics[width=\linewidth,clip=true,trim= 0 0 0 0]{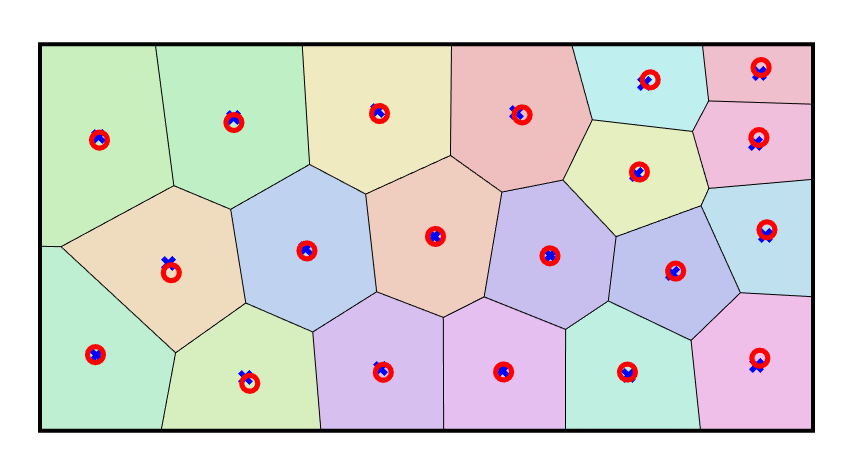} & \includegraphics[width=\linewidth,clip=true,trim= 0 0 0 0]{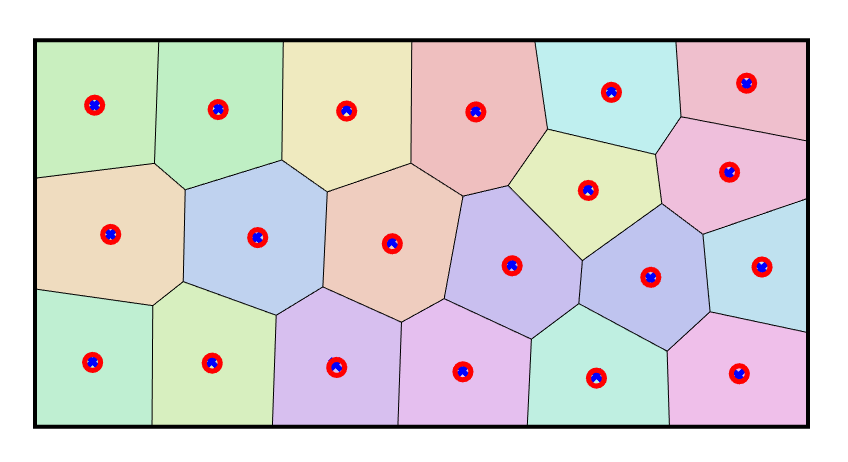} & \includegraphics[width=\linewidth,clip=true,trim= 0 0 0 0]{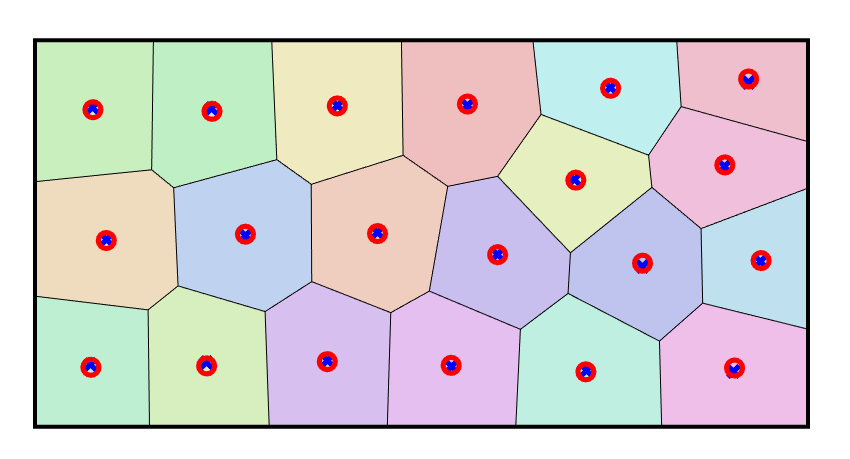}& 
		\includegraphics[width=\linewidth,clip=true,trim= 0 0 0 0]{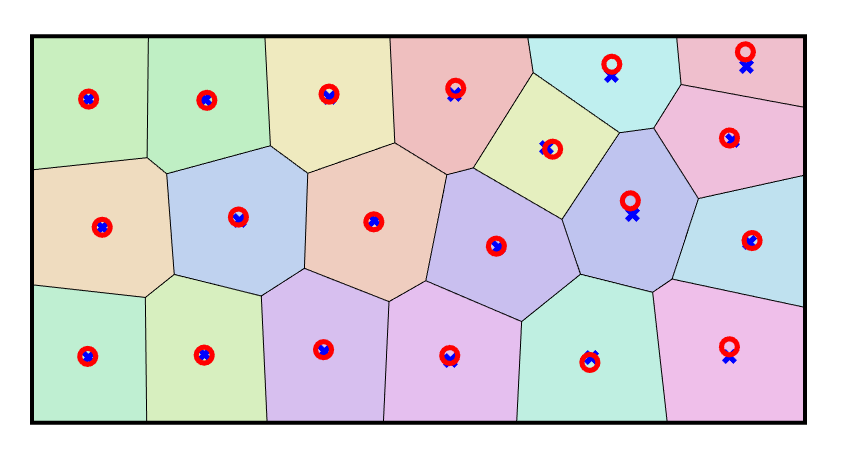}\\ 
		(i) $t = 3 s$ & (ii) $t = 6 s$ & (iii) $t = 14 s$ & (iv) $t = 18 s$
	\end{tabular}
	\caption{Ten leader robots (blue circles) are modeled as a mass-spring-damper system and used to define the time-varying domain. The domain becomes non-convex due to the external forces, i.e., tracking the reference path, avoiding the red obstacles and steering the orientation. Each mesh is transformed to a rectangle such that the MSD system is transfomed as a big static rectangle. The projection of twenty follower robots are performing coverage over the virtual rectangular domain, and the control signal is brought back to the real world by the inverse of the transformation. In (i), (ii), (iii) and (iv), show the simulation at 3$s$, 6$s$, 14$s$ and 18$s$ respectively. The black dashed curve represents the reference path, and the blue curve represents the actual trajectory of the "head" of the MSD system (mid point $c_x$ of $x_1$ and $x_2$). \label{Pic:Simulation}}
\end{figure*}

\begin{figure}[h]\centering
	\includegraphics[width=0.6\linewidth,clip=true,trim=0 0 0 10]{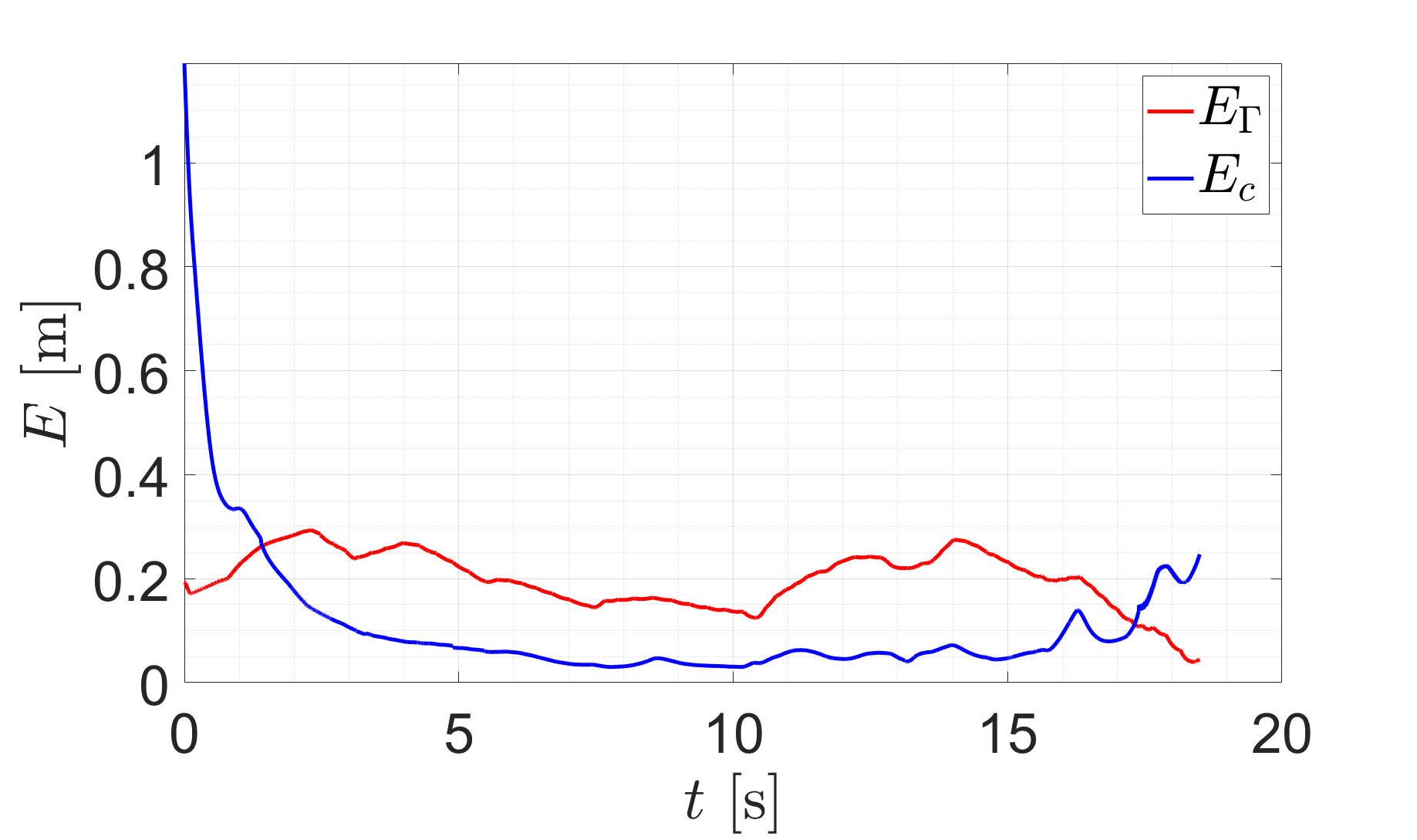}
	\caption{Tracking error and coverage aggregating error}\label{Pic_result}
\end{figure}

\subsection{Coverage over time-varying non-convex domains}

As a consequence of Lemma \ref{lemma:DeformationOfMSD}, a mesh in the mass-spring-damper network will always be a quadrilateral by restricting the elongation and compression of the springs, thus the time-varying domain defined by a team of leaders will be a composition of quadrilaterals. The strategy for performing coverage control of a group of followers over the composition of quadrilaterals will be to transform the successive quadrilaterals one by one such that a long rectangular region with prescribed height and length is obtained, as illustrated in Fig. \ref{Pic_Transformation}. In this way, the coverage control law \eqref{eq:TVD-C} and \eqref{eq:TVD-D1} can be directly utilized, and the resulting control signals can be transformed back into the actual domain.

Without loss of generality, let the $M$ leaders ($M$ being an even number) be labeled as illustrated in Fig. \ref{Pic_Transformation}(ii), then there are $\frac{M}{2}-1$ meshes in the network, and the $h$th mesh (denoted as mesh$(h)$) will be defined by vertices $x_{2h-1},\,x_{2h},\,x_{2h+2}$ and $x_{2h+1}, \, h = 1,\cdots, \frac{M}{2}-1$ in counterclockwise order. Given these vertices and by using a perspective projection as in \cite{criminisi1999plane}, the mesh$(h)$ can be transformed to an arbitrary convex quadrilateral mesh$(\widetilde{h})$ of vertices with prescribed positions $\widetilde{x}_{2h-1},\,\widetilde{x}_{2h},\,\widetilde{x}_{2h+2}$ and $\widetilde{x}_{2h+1}$, i.e., $T_h:\text{mesh}(h)\to\text{mesh}(\widetilde{h})$.

The position of $i$th follower robot $p_{i}$ in the actual domain will be transformed by $T_h$ if $p_{i}\in \text{mesh}(h)$, i.e., $\widetilde{p}_{i} = T_h(p_{i})$. The transformed followers $\{\widetilde{p}_{i}\}_{i=1}^N$ will be performing coverage over the resulting long rectangular domain $\widetilde{\mathcal{S}}(t)$ defined by vertices $\{\widetilde{x}_{k}\}_{k=1}^{M}$.

For $\widetilde{p}_{i} = T_h(p_{i})$, by the chain rule we can obtain that 
\begin{align*}
	\dot{\widetilde{p}}_{i} = \frac{\partial T_h(p_{i})}{\partial p_i} \dot{p}_i+ \frac{\partial T_h(p_{i})}{\partial t}
\end{align*}
which leads to
\begin{align}\label{eq:Transformed_ControlLaw}
	\dot{p}_i = \left(\frac{\partial T_h(p_{i})}{\partial p_i}\right)^{-1}\left(\dot{\widetilde{p}}_{i} - \frac{\partial T_h(p_{i})}{\partial t}\right)
\end{align}
where from \eqref{eq:TVD-C} and \eqref{eq:TVD-D1},
\begin{equation*}\label{eq:TVD-C_transformed}
	\dot{\widetilde{p}}_i = \left(  I - \frac{\partial \widetilde{C}_i}{\partial \widetilde{p}_i}\right)
	^{-1}\left(  K(\widetilde{C}_i-\widetilde{p}_i) + \frac{\partial \widetilde{C}_i}{\partial t}\right).
\end{equation*}
we can find that $\frac{\partial \widetilde{C_{i}}}{\partial t} $ is zero in this case since the time-dependency of the domain is captured by the defined transformation $T_h$ when the result virtual rectangular domain and the density are static.

\section{Multi-Robot Simulation}
\label{Simulation}

In previous sections, a time-varying domain is defined by a group of leader robots, and a coverage control law \eqref{eq:Transformed_ControlLaw} is developed to distribute a team of follower robots in a non-convex environment. The proposed control strategy is validated through simulation in this section.

In the simulation, we employ potential field method to obtain a reference path in the first layer of the controller. Then ten leader robots are modeled as masses interconnected by virtual springs and dampers subject to external forces. Each leader is equipped with sensors which can generate forces near obstacles, the first two leaders are tracking the reference path and determine the orientation of the MSD system. A time-varying non-convex domain is defined by considering the positions of leaders as vertices. For each quadrilateral mesh defined by 4 robots, we develop a perspective transformation to transform it into a static rectangle, such that the successive meshes are transformed into a large static rectangle as shown in Fig. \ref{Pic:Simulation}. A team of ten follower robots is also transformed into the virtual rectangular domain and commanded to perform coverage. The velocity commands generated in the virtual domain can then be mapped back by using the inverse transformation.

We employ two metrics to evaluate the performance of the leaders and followers, i.e., the reference path tracking error $E_{\Gamma}$ and aggregate coverage error $E_{c}$. The tracking error captures the minimum distance from the ``head" of the MSD system (mid point $c_x$ between $x_1$ and $x_2$) to the path, and the aggregate coverage error captures the error between followers and centroids for the corresponding Voronoi cells in the virtual domain. The result is shown in Fig. \ref{Pic_result}. As we can see from Fig. \ref{Pic:Simulation} and Fig. \ref{Pic_result}, the leaders are able to track the reference path while avoiding the obstacle. If there is no obstacles in the environment, the reference path tracking error will asymptotically approach to zero due to the tracking force designed by \eqref{eq:trackingforce}; but in a cluttered environment, the tracking error is influenced by several factors, e.g., the size of the time-varying domain, and the obstacles which locate on the both sides of the reference path. The followers in the real world move correspondingly to maintain a centroidal Voronoi tessellation configuration in the virtual domain.

\section{Conclusion}\label{Conclusion} 

In this paper, we present a leader-follower framework for navigation of a multi-robot system. The team of leaders are modeled a MSD system which navigates and interacts with the environment. A team of followers coordinates with each other and distribute optimally over the domain defined by leaders in a transformed virtual convex domain, and this results in near-optimal distribution of the followers in the real world when the deformation of the domain is slight. The leaders are able to track the reference path, avoid the obstacles and sustain a minimum amount of area required for the followers to distribute over. The followers are capable of occupying the domain defined by leaders and capture the motion of the leaders in a decentralized fashion.

\bibliographystyle{unsrt} 
\bibliography{ICRA2021_arXive}
\end{document}